\newcommand{\shrink}[1]{}
\newcommand{\ignore}[1]{\relax}
\newcommand{\e}{\epsilon}
\newcommand{\E}{\mathbf{E}}
\newcommand{\D}{Q} 
\newcommand{\pecoc}{\operatorname{pecoc}} 
\newcommand{\obj}{\operatorname{obj}} 
\newcommand{\path}{T} 
\newcommand{\set}[1]{\left\{ {#1} \right\}}
\newcommand{\abs}[1]{\left| {#1} \right|}
\newcommand{\parens}[1]{\left( {#1} \right)}
\newcommand{\cond}{\mid}
\newcommand{\rbar}{\bar{r}}
\newcommand{\y}{\upsilon}
\def\isleft(#1,#2){\operatorname{right}_{#1}(#2)}
\newtheorem{theorem}{Theorem}
\newtheorem{lemma}[theorem]{Lemma}
\newtheorem{claim}[theorem]{Claim}
\newtheorem{definition}[theorem]{Definition}
\title{Conditional Probability Tree Estimation Analysis and
  Algorithms}
\author{Alina Beygelzimer \\
IBM Research\\
beygel@us.ibm.com \\
\And John Langford\\
Yahoo! Research\\
jl@yahoo-inc.com\\
\And Yuri Lifshits\\
Yahoo! Research\\
yury@yury.name\\
\And Gregory Sorkin\\
IBM Research\\
sorkin@us.ibm.com\\
\And Alex Strehl\\
Yahoo! Research\\
astrehl@gmail.com
}
\begin{document}

\maketitle

\begin{abstract}
We consider the problem of estimating the conditional probability of a
label in time $O(\log n)$, where $n$ is the number of possible labels.
We analyze a natural reduction of this problem to a set of binary
regression problems organized in a tree structure, proving a regret
bound that scales with the depth of the tree.  Motivated by this
analysis, we propose the first online algorithm which provably
constructs a logarithmic depth tree on the set of labels to solve this
problem.  We test the algorithm empirically, showing that it works
succesfully on a dataset with roughly $10^6$ labels.
\end{abstract}

\section{Introduction}
The central question in this paper is how to efficiently estimate the
conditional probability of label $y\in \{1,\ldots,n\}$ given an
observation $x\in X$.  Virtually all approaches for solving this
problem require $\Omega(n)$ time. A commonly used one-against-all approach,
which tries to predict the probability of label $i$ versus all other
labels, for each $i\in\{1,\ldots,n\}$, 
requires $\Omega(n)$ time per training example.
Another common $\Omega(n)$ approach is to
learn a scoring function $f(y,x)$ and convert it into a
conditional probability estimate according to $f(y,x)/Z(x)$, where
$Z(x) = \sum_i f(i,x)$ is a normalization factor.

The motivation for dealing with the computational difficulty is the
usual one---we want the capability to solve otherwise unsolvable
problems.  For example, one of our experiments involves a
probabilistic prediction problem with roughly $10^6$ labels and $10^7$
examples, where any $\Omega(n)$ solution is intractable.

\subsection{Main Results}
In Section~\ref{sec:online}, we provide the first online
supervised learning algorithm that trains and
predicts with $O(\log n)$ computation per example.
The algorithm does not require 
knowledge of $n$ in advance; it adapts naturally as new labels are
encountered.

The prediction algorithm uses a binary tree where regressors
are used at each node to predict the conditional probability that the
true label is to the left or right.  The probability of a leaf is
estimated as the product of the appropriate conditional probability
estimates on the path from root to leaf.  In our experiments, we
use linear regressors trained via stochastic
gradient descent.

The difficult part of this algorithm is constructing the tree itself.
When the number of labels is large, it becomes critical to construct
easily solvable binary problems at the nodes.  In
Section~\ref{sec:online-analysis}, we introduce a tree-construction
rule with two desirable properties.  First, it always results in depth
$O(\log n)$.  It also encourages natural problems by minimizing
expected loss at the nodes.  The technique used in the algorithm is
also useful for other prediction problems such as multiclass
classification.

We test the algorithm empirically on two datasets 
(in Section~\ref{sec:online-experiments}), and find that it both
improves performance over naive tree-building approaches and competes
in prediction performance with the common one-against-all approach,
which is exponentially slower.

Finally, we analyze a broader set of logarithmic time probability
estimation methods.  In Section~\ref{sec:cpt} we prove that any tree
based approach has squared loss bounded by the tree depth squared
times the average squared loss of the node regressors used.  In
contrast, the PECOC approach~\cite{pecoc} has squared loss bounded by
just $4$ times the average squared loss but uses $\Omega(n)$
computation.  This suggests a tradeoff between computation and squared
loss multiplier.  Section~\ref{sec:hpecoc} describes a $k$-parameterized 
construction achieving a ratio of 
$4 (\log_k n)^2 \left( \frac{k-1}{k} \right)^2$ 
while using $O(k \log_k n)$ computation, where $k=2$ gives 
the tree approach and $k=n$ gives PECOC.

\subsection{Prior Work}
There are many methods used to solve conditional probability
estimation problems, but very few of them achieve a logarithmic
dependence on $n$.  The ones we know are batch constructed regression
trees, C4.5~\cite{C45}, ID3~\cite{ID3}, or Treenet~\cite{Treenet},
which are both too slow to consider on datasets with the scale of
interest, and incapable of reasonably dealing with new labels
appearing over time.

Mnih and Hinton~\cite{MH} constructed a special purpose tree-based
algorithm for language modeling, which is perhaps the most similar
previous work.  The algorithm there is 
specialized to word prediction and is substantially slower since it
involves many iterations through the training data.  However, the
general analysis we provide in Section~\ref{sec:cpt} applies to their
algorithm.  We regard the empirical success of their algorithm as
further evidence that tree-based approaches merit investigation.
\subsection{Outline}

Section~\ref{sec:static} states and analyses
methods for logarithmic time probabilistic prediction {given} a
tree structure.  Section~\ref{sec:online} gives an
algorithm for building the tree structure.
The analysis in the first section is sufficiently general so that it
applies to the second.

\section{Problem Setting} \label{sec:setting}
Given samples from a distribution $P$ over $X\times Y$, where $X$
is an arbitrary observation space and $Y = \{1,\ldots, n\}$,
the goal is to estimate the conditional probability $P(y\mid x)$
of a label $y\in Y$ for a new observation $x\in X$.

For an estimator $\D(y\mid x)$ of $P(y\mid x)$,
the \emph{squared loss} of $\D$ with respect to $P$ is defined
as
\begin{align}
\ell_P(\D) &= \E_{(x,y)\sim P} (P(y \mid x) - \D(y \mid x))^2.
 \label{lossdef}
\end{align}
It is more common to define an observable squared loss where $P(y\, |\, x)$ 
in equation~(\ref{lossdef}) is replaced by $1$.  
We consider
\emph{regret} with respect to the common definition, since it is well
known that the difference between observable squared loss and the
minimum possible observable squared loss is equal to $\ell_P(\D)$.  We
therefore use regret and squared loss interchangeably in this paper.

It is well known that squared loss is a strictly proper scoring
rule~\cite{Brier}, thus $\ell_P(Q)$ is uniquely minimized by $Q=P$.
Our analysis focuses on squared loss because it is a bounded proper
scoring rule.  The boundedness implies that convergence guarantees
hold under weaker assumptions than for unbounded proper scoring rules
such as log loss.

\section{Probabilistic Prediction Given a Tree}
\label{sec:static}
This section assumes that a tree structure is given, and analyzes
how to use it for probabilistic logarithmic time prediction.

\subsection{Conditional Probability Tree} \label{sec:cpt}
Consider a fixed binary tree whose leaves are the $n$
labels.  For a leaf node $y\in Y$, let $\path(y)$ be the set of
non-leaf nodes on the path from the root to $y$ in the tree.

Each non-leaf node $i$ is associated with the regression problem of
predicting the probability, under $P$, that the label $y$ of a given
observation $x\in X$ is in the left subtree of $i$, conditioned on
$i\in \path(y)$.  The following procedure shows how to transform
multiclass examples into binary examples for each non-leaf node in the
tree.  Here $\isleft(i,y)$ is $0$ when $y$ is in the left subtree
of node $i$, and $1$ otherwise.

\SetAlFnt{\normalsize}
\begin{algorithm}
\dontprintsemicolon
\caption{Conditional Probability Tree Training (training set $S$, regression algorithm $R$)}
\label{rt-train}
\ForEach{internal node $i$}{
$S_i \leftarrow \emptyset$
}
\ForEach{example $(x,y)\in S$}{
\ForEach{node $i\in \path(y)$}{
Add $(x,\isleft(i,y))$ to $S_i$.
}}
\ForEach{internal node $i$}{
{\bf train} {$f_i = R(S_i)$}
}
\end{algorithm}
\noindent
Given a new observation $x\in X$ and a
label $y\in Y$, we use the learned
binary regressors $f_i$ to estimate $P(y\mid x)$.
Letting $\D_i(1 \mid x) = f_i(x)$ and
$\D_i(0\mid x) = 1-f_i(x)$, we define the estimate
\begin{align}
\D(y\mid x) &= \prod_{i\in \path(y)} \D_i(\isleft(i,y)\mid x) .  \label{Qdef}
\end{align}

\noindent
\subsubsection{Analysis of the Conditional Probability Tree}
\label{sec:binary}

Algorithm~\ref{rt-train} implicitly defines a distribution $P_i$
over $X\times \{0,1\}$ induced at node $i$:
A sample from $P_i$ is obtained by drawing $(x,y)$ according to $P$
until $i\in\path(y)$, and outputting $(x,\isleft(i,y))$
(although we never explicitly perform this sampling).
The following theorem bounds the squared loss of $\D$ given
the average squared loss of the binary regressors.

\begin{theorem} \label{tree:cor} 
For any distribution $P$, any set of node estimators $Q_i$, and any
pair $(x,y)$, with $Q$ given by equation~\eqref{Qdef},
\begin{align*}
(Q& (y \mid x) -P(y\mid x))^2 \\
   & \leq d^2 \; \E_i \parens{ \D_i(\isleft(i,y) \mid x)-P_i(\isleft(i,y)\mid x) }^2 ,
\end{align*}
where $d = |\path(y)|$ and
the expectation is over $i$ chosen uniformly at random from $\path(y)$.
\end{theorem}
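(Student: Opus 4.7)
The plan is to first establish that $P(y \mid x)$ factorizes along the root-to-leaf path in exactly the same way that $Q(y \mid x)$ does by construction, and then bound the difference of the two products by the sum of the per-node errors.

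For the factorization, recall that the distribution $P_i$ at node $i$ is defined by first conditioning on $i \in T(y)$ and then reading off which child of $i$ contains $y$. By the chain rule applied along $T(y)$, the probability that a sample drawn from $P(\cdot \mid x)$ has label $y$ equals the product, over $i \in T(y)$, of the probability of turning in the correct direction at $i$ conditional on reaching $i$. This gives
\begin{equation*}
P(y \mid x) \;=\; \prod_{i \in T(y)} P_i(\isleft(i,y) \mid x),
\end{equation*}
which matches the defining product for $Q(y \mid x)$ in~\eqref{Qdef}. This is the only place where the probabilistic structure enters; after this the argument is purely algebraic.

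Next I would bound the difference of the two products. Writing $a_i = Q_i(\isleft(i,y) \mid x)$ and $b_i = P_i(\isleft(i,y) \mid x)$, both in $[0,1]$, the standard telescoping identity
\begin{equation*}
\prod_{i} a_i - \prod_{i} b_i \;=\; \sum_{j \in T(y)} (a_j - b_j) \Bigl(\prod_{i \text{ before } j} b_i\Bigr)\Bigl(\prod_{i \text{ after } j} a_i\Bigr)
\end{equation*}
combined with the fact that each $a_i, b_i \in [0,1]$ yields
\begin{equation*}
\bigl|Q(y \mid x) - P(y \mid x)\bigr| \;\leq\; \sum_{i \in T(y)} |a_i - b_i|.
\end{equation*}
Squaring and applying Cauchy--Schwarz (or equivalently the power-mean inequality) gives $\bigl(\sum_{i} |a_i - b_i|\bigr)^2 \leq d \sum_{i} (a_i - b_i)^2 = d^2 \, \mathbf{E}_i (a_i - b_i)^2$, which is exactly the claimed bound.

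I do not expect a serious obstacle here; the only subtlety is making the factorization of $P(y \mid x)$ precise, since $P_i$ was defined implicitly via a sampling procedure rather than as a direct conditional. Once one verifies that the implicit definition coincides with the natural conditional ``probability of going in the correct direction at $i$, given $x$ and given that the path reaches $i$,'' the rest is the standard ``product-to-sum'' telescoping plus Cauchy--Schwarz, and no per-node boundedness beyond $a_i, b_i \in [0,1]$ is needed.
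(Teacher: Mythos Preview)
Your proposal is correct and follows essentially the same two-step structure as the paper: first bound $|Q(y\mid x)-P(y\mid x)|$ by $\sum_{i\in T(y)}|q_i-p_i|$, then square and pass from the $\ell_1$ to the $\ell_2$ average. The only technical difference is in how the product-difference bound is obtained: you use the standard telescoping identity, whereas the paper proves it via a geometric ``cube volume'' argument that sandwiches $\prod q_i$ and $\prod p_i$ between $\prod\min\{p_i,q_i\}$ and $\prod\max\{p_i,q_i\}$ and covers the gap by slabs $|q_i-p_i|\prod_{j\neq i}\max\{p_j,q_j\}$. Both routes yield the same final $\sum_i|q_i-p_i|$ bound after dropping the $[0,1]$-valued product factors; the paper's intermediate bound is symmetric in $p$ and $q$ while yours is not, but this plays no role in the theorem. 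Your explicit verification of the factorization $P(y\mid x)=\prod_i P_i(\isleft(i,y)\mid x)$ is a point the paper leaves implicit, and your use of Cauchy--Schwarz is equivalent to the paper's use of Jensen on $t\mapsto t^2$.
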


\begin{proof}
We use Lemma~\ref{tree:thm}.  Using the notation of its proof,
observe that \begin{align*} \parens{ \sum_{i=1}^d \abs{q_i - p_i} }^2 &=
 d^2 \parens{\E_i \abs{q_i - p_i}}^2 \\ &\leq
d^2 \E_i \parens{|q_i-p_i|^2}\end{align*} using Jensen's
inequality.
\end{proof}

Most of the theorem is proved with the following core lemma.
For a node $i$ on the path from the root to label $y$,
define $p_i=P_i(\isleft(i,y)\mid x)$, the
conditional probability that the label is consistent with
the next step from $i$ given
that all previous steps are consistent.
Similarly define $q_i = \D_i(\isleft(i,y) \mid x)$.

\bigskip
\begin{lemma}\label{tree:thm}
For any distribution $P$,
any set of node estimators $Q_i$,
and any pair $(x,y)$,
with $Q$ given by equation~\eqref{Qdef},
\begin{align*}
\abs{ Q(y \mid x)-P(y\mid x)}
& \leq \sum_{i\in\path(y)} | q_i - p_i | \prod_{j\neq i} \max \{ p_j,q_j\} \\
   & \leq \sum_{i \in \path(y)} \abs{ q_i-p_i}. \\
\end{align*}
\end{lemma}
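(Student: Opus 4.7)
The plan is to reduce everything to a telescoping identity for the difference of two products of numbers in $[0,1]$, then apply the triangle inequality.

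First I would unpack what $P(y\mid x)$ and $Q(y\mid x)$ look like as products over $\path(y)$. By definition~\eqref{Qdef}, $Q(y\mid x) = \prod_{i\in\path(y)} q_i$. For $P$, the chain rule applied to the sequence of left/right decisions along the root-to-$y$ path, together with the definition of $P_i$ as the conditional distribution over the step taken at $i$ given that the sample reaches $i$, gives $P(y\mid x) = \prod_{i\in\path(y)} p_i$. This identification is the only ``semantic'' step; everything afterwards is algebraic.

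Next, enumerate $\path(y)$ as $1,\dots,d$ and use the standard telescoping decomposition
\begin{align*}
\prod_{i=1}^d q_i - \prod_{i=1}^d p_i
  &= \sum_{i=1}^d \Bigl(\prod_{j<i} p_j\Bigr)(q_i - p_i)\Bigl(\prod_{j>i} q_j\Bigr).
\end{align*}
Taking absolute values and applying the triangle inequality gives
\begin{align*}
\abs{Q(y\mid x)-P(y\mid x)}
  &\leq \sum_{i=1}^d |q_i-p_i| \Bigl(\prod_{j<i} p_j\Bigr)\Bigl(\prod_{j>i} q_j\Bigr).
\end{align*}
Since $p_j,q_j\in[0,1]$, each factor $p_j$ (for $j<i$) and $q_j$ (for $j>i$) is bounded by $\max\{p_j,q_j\}$, yielding the first claimed inequality. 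The second inequality is then immediate because $\max\{p_j,q_j\}\leq 1$ for all $j$.

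The only real obstacle is choosing a telescoping that makes the max-bound fall out cleanly; the symmetric choice (``$p$'s on the left, $q$'s on the right'') is convenient but arbitrary. One could equally telescope the other way, or average the two telescopings; all give the same final bound via $\max\{p_j,q_j\}$. Verifying $P(y\mid x)=\prod p_i$ from the definition of the induced distributions $P_i$ is the only non-mechanical check, and it is a direct application of the chain rule.
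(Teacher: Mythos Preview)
Your proof is correct but takes a different route from the paper's. The paper first sandwiches $|Q(y\mid x)-P(y\mid x)|$ by $\prod_i \max\{p_i,q_i\} - \prod_i \min\{p_i,q_i\}$ (since the max-product dominates both $Q$ and $P$ and the min-product is dominated by both), and then uses a geometric covering argument: viewing $\prod_i \min\{p_i,q_i\}$ as one corner of the box with side lengths $\max\{p_i,q_i\}$, the remaining volume is covered (with overlap) by the slabs $|q_i-p_i|\prod_{j\neq i}\max\{p_j,q_j\}$. You instead go straight at $\prod q_i - \prod p_i$ with the standard telescoping identity and bound the mixed products $\prod_{j<i}p_j\prod_{j>i}q_j$ termwise by $\prod_{j\neq i}\max\{p_j,q_j\}$. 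Your argument is more elementary and purely algebraic, and in fact your intermediate bound (before passing to the $\max$) is sharper than the paper's slab bound; the paper's version, on the other hand, isolates the quantity $\prod_i\max\{p_i,q_i\}-\prod_i\min\{p_i,q_i\}$, which has a clean geometric interpretation and is symmetric in $p$ and $q$ without any choice of telescoping order. Both reach the stated lemma with no gaps.
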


\vskip -.2in
The last inequality is the simplest---it says the differences in
errors add.  However, the quantity after the first inequality can be
much tighter. 

\begin{proof}
We first note that
$$ \abs{Q(y \mid x) - P(y \mid x)} \leq \prod_i \max\{p_i,q_i\} - \prod_i \min \{ p_i,q_i\}$$
since $\prod_i \max\{p_i,q_i\} \geq \max \{Q(y \mid x), P(y \mid x) \}$ and
$\prod_i \min\{p_i,q_i\} \leq \min \{Q(y \mid x), P(y \mid x) \}$.

We use a geometric argument.  With
$\prod_i \min\{p_i,q_i\}$ defining the volume of one ``corner'' of a
cube with sides $\max\{p_i,q_i\}$, slabs $|q_i-p_i| \prod_{j\neq i} \max \{ p_j,q_j\}$
  fill in the remaining volume (with overlap).  Consequently, we can
  bound the difference in volume as

\begin{align*}
\prod_i \max\{p_i,q_i\} &- \prod_i \min \{ p_i,q_i\}\\
& \leq \sum_i | q_i - p_i | \prod_{j\neq i} \max \{ p_j,q_j\} \\
& \leq \sum_i | q_i - p_i |,
\end{align*}
since all $p_j$ and $q_j$ are bounded by 1.
\end{proof}

As suggested by the proof, the lemma's bound can be asymptotically
tight.  If all $p_i$ are equal to some $p$ and all $|q_i-p_i|$ are
small, the left side is approximately $p^{d-1} \sum_i |q_i-p_i| = d
p^d \E |q_i-p_i|$, a factor $p^d$ times the right side.

\subsection{Conditional PECOC} \label{sec:hpecoc}
The conditional probability tree is as computationally tractable as we
could hope for, but is not as robust as we could hope for.  For
example, the PECOC approach~\cite{pecoc} yields a squared loss
multiplier of $4$ independent of the number of labels.  Is there an
approach more robust than the tree, but requiring less computation
than PECOC?

We provide a construction which trades off between the extremes of
PECOC and the conditional probability tree.  The essential idea is to
shift from a binary tree to a $k$-way tree, where PECOC with $k-1$
regressors is used at each node in the tree to estimate the
probability of any child conditioned on reaching the node.  For
simplicity, we assume that $k$ is a power of 2, and $n$ is a power of
$k$.

\begin{theorem}
\label{thm:hp}
Pick a $k$-way tree on the set of $n$ labels, where $k$ is a power of 2.
For all distributions $P$ and all sets of learned regressors,
with $k-1$ regressors per node of the tree, for all pairs $(x,y)$,
$$
 (\D(y \mid x)-P(y\mid x) )^2 \leq
    4(\log_kn)^2\left(\frac{k-1}{k}\right)^2\epsilon^2,
$$
where $\epsilon^2$ is the average squared loss of the $(k-1)\log_kn$
questioned regressors.
\end{theorem}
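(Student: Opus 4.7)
The plan is to mirror the proof of Theorem~\ref{tree:cor} with the tree being $k$-ary rather than binary, and to invoke the PECOC regret bound node-by-node. For each internal node $i$ on the path $\path(y)$ from the root to the leaf $y$, let $p_i$ be the true conditional probability of taking the unique child of $i$ that lies on the path to $y$, given that $i$ is reached, and let $q_i$ be the corresponding PECOC estimate produced by the $k-1$ regressors at node $i$. Then $P(y\mid x)=\prod_{i\in\path(y)} p_i$ and $\D(y\mid x)=\prod_{i\in\path(y)} q_i$, with $|\path(y)|=\log_k n$.

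The proof of Lemma~\ref{tree:thm} only uses that both $P(y\mid x)$ and $\D(y\mid x)$ are products of quantities in $[0,1]$ indexed along the path; the geometric ``cube'' argument therefore goes through verbatim for the $k$-ary case, yielding
\[
\abs{\D(y\mid x)-P(y\mid x)} \leq \sum_{i\in\path(y)} \abs{q_i-p_i}.
\]
Squaring and applying Jensen's inequality exactly as in Theorem~\ref{tree:cor} gives
\[
(\D(y\mid x)-P(y\mid x))^2 \leq (\log_k n)^2 \, \E_i (q_i-p_i)^2,
\]
where $i$ is drawn uniformly from $\path(y)$.

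At this point I would plug in the PECOC guarantee at each node: with $k-1$ regressors per node estimating the probability over $k$ children, the PECOC bound~\cite{pecoc} gives $(q_i-p_i)^2 \leq 4\bigl(\frac{k-1}{k}\bigr)^2 \epsilon_i^2$, where $\epsilon_i^2$ is the average squared loss of the $k-1$ regressors at node $i$. Since every node on the path contributes exactly $k-1$ regressors, averaging $\epsilon_i^2$ uniformly over the $\log_k n$ path nodes equals $\epsilon^2$, the average squared loss of the full set of $(k-1)\log_k n$ regressors consulted. Combining gives the stated bound.

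The main obstacle is the PECOC step itself: one must check that the form of the per-node bound quoted here, with multiplier $4((k-1)/k)^2$ rather than a loose $4$, is what the PECOC construction actually delivers (this is where the $k$-dependence of the theorem comes from, and why the $k=n$, depth-$1$ limit recovers pure PECOC while $k=2$ recovers the binary tree bound of Theorem~\ref{tree:cor}). The tree-aggregation steps are then a direct generalization of the binary analysis and should require no new ideas.
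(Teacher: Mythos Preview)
Your proposal is correct and matches the paper's own proof, which simply says to compose Lemma~\ref{tree:thm} (the path-product bound, whose geometric argument indeed does not depend on the tree being binary) with Lemma~\ref{pecoc:thm} (the per-node PECOC bound). The ``main obstacle'' you flag is exactly what Lemma~\ref{pecoc:thm} supplies: the careful PECOC analysis there yields the multiplier $4\bigl(\tfrac{k-1}{k}\bigr)^2$ when applied to a $k$-ary node with $k-1$ nontrivial regressors, so no gap remains.
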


\begin{proof}
The proof is by composition of two lemmas.

In each node of the tree, Lemma~\ref{pecoc:thm} bounds the power of
the adversary to disturb the probability estimate as a function of the
adversary's regret.  Similarly, Lemma~\ref{tree:thm} bounds the power
of the adversary to induce an overall misestimate as a function of the
adversary's power to disturb the estimates within each node on the path.
\end{proof}

The curve below illustrates how the construction trades off
computation for a better regret bound as a function of $k$.

\includegraphics[angle=270,width=.48\textwidth]{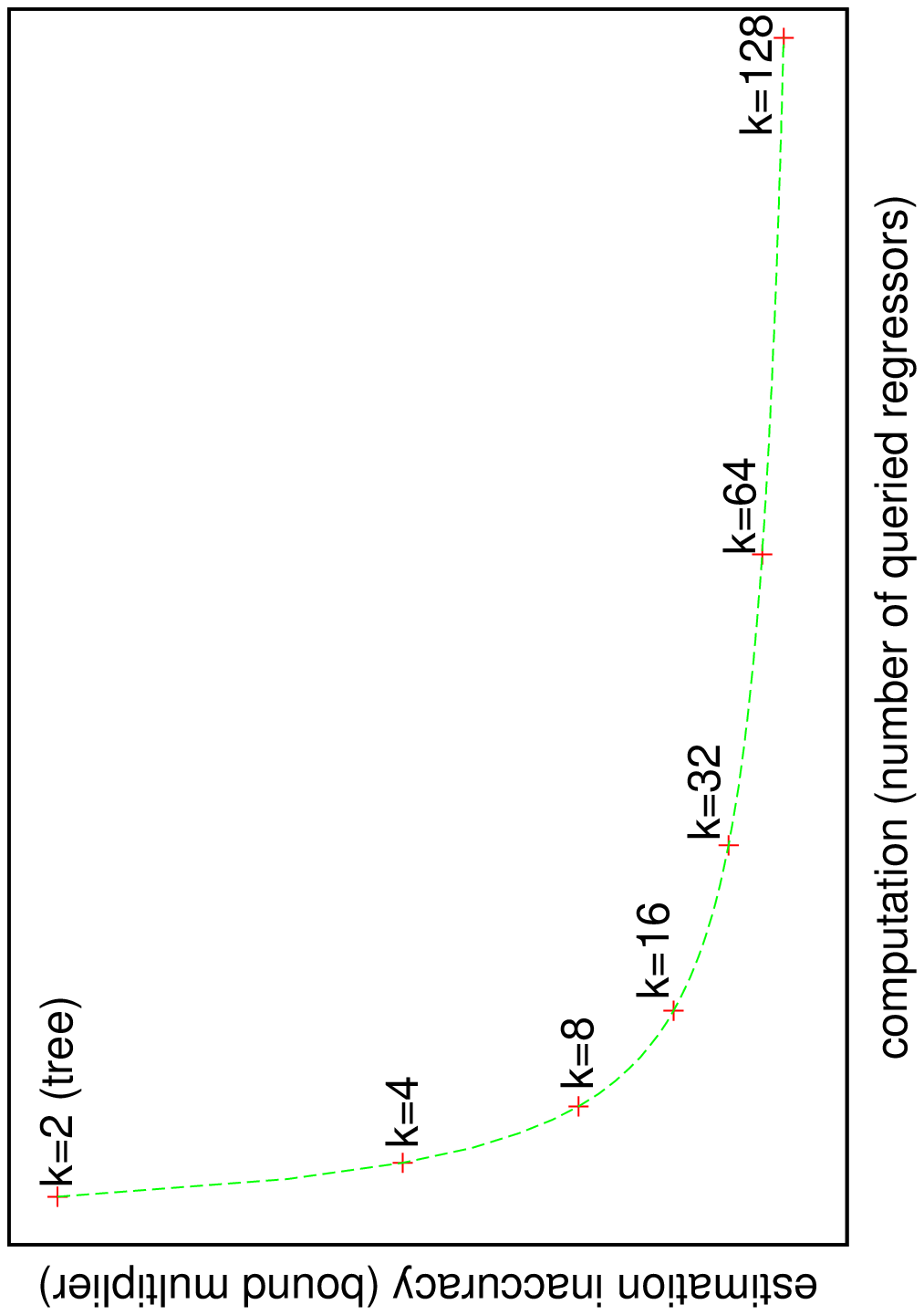}

To complete the proof of Theorem~\ref{thm:hp} we describe the PECOC
construction in Section~\ref{sec:pe} and prove Lemma~\ref{pecoc:thm} in
Section~\ref{sec:careful}.

\subsubsection{The PECOC Construction}
\label{sec:pe}
The PECOC construction is defined by a binary matrix $C$ with each
column a label and each row defining a regression problem.  The
regression problem corresponding to row $i$ is to predict the
probability given $x$ that the correct label is in the subset
\begin{align}
Y_i = \set{ y\in Y \colon C(i,y)=1}. 	\label{subsets}
\end{align}

We use an explicit family of Hadamard codes
given by the recursive formula
$$
C_2 =
\begin{bmatrix}
  1 & 1      \\
  1 & 0
\end{bmatrix},
\quad
C_{2^t} =
\begin{bmatrix}
  C_t & C_t      \\
  C_t &  1-C_t
\end{bmatrix} .
$$ We use a matrix $C_{2^t}$ with $2^{t-1} \leq n < 2^t$, noting that
its size $2^t$ is less than $2n$; if $2^t>n$ we simply add dummy
labels.  We henceforth assume without loss of generality that $n$ is a
power of~2.  We train PECOC according to the following algorithm.

\SetAlFnt{\normalsize}
\begin{algorithm}
\label{alg:PECOC}
\dontprintsemicolon
\caption{PECOC Training (training set $S$, regression algorithm $R$)}
\For{each row $i$ of $C$}{
Let $S_i = \{ (x,C(i,y)) : (x,y) \in S\}$\;
{\bf train} $r_i = R(S_i)$. \;
}
\end{algorithm}
Given a new observation $x\in X$ and a label $y\in Y$, PECOC uses the
binary regressors $r_i$ learned in Algorithm~\ref{alg:PECOC} to estimate
$P(y\mid x)$ using the formula
\begin{align}
\pecoc(y \mid x) = 		
 2\, \E_i \big[ & C(i,y)r_i(x) +
 \notag
  \\
  & (1-C(i,y))(1-r_i(x)) \big] - 1,
		\label{pecocest}
\end{align}
where the expectation is over $i$ drawn uniformly from the rows of
$C$.  The reason for this formula is clarified by the proof of
Lemma~\ref{pecoc:thm}.

\subsubsection{A Careful PECOC analysis}
\label{sec:careful}

The following theorem gives the precise regret bound, which follows
from the analysis in \cite{pecoc} but is tighter for small values of $n$
than the bound stated there.

\begin{lemma}\label{pecoc:thm}\emph{(PECOC regret~\cite{pecoc})
For all distributions $P$ and all sets of
regressors $r_i$ (as defined in Algorithm~\ref{alg:PECOC}), for all $x\in X$
and $y\in Y$,
\begin{align*}
(\pecoc(y \mid x) & - P(y\mid x))^2 \leq \\
 & 4\left(\frac{n-1}{n}\right)^2
\E_i (r_i-P(y\in Y_i\mid x))^2,
\end{align*}
where $Y_i$ 
is the subset defined by row $i$ per~\eqref{subsets}.
}
\end{lemma}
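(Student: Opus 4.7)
The plan is to decode $\pecoc(y\mid x)$ as a single inner product against the Hadamard column for $y$, use the orthogonality of $C$ to show that $\pecoc$ is exactly unbiased under perfect regressors, and then apply Cauchy--Schwarz on the $n-1$ nontrivial rows. First I would switch to $\pm 1$ encoding by setting $B(i,y) = 2C(i,y) - 1$. A direct expansion of the bracket in~\eqref{pecocest} collapses it to $\pecoc(y\mid x) = \E_i[B(i,y)(2r_i(x) - 1)]$. With $p_i = P(y\in Y_i\mid x)$, the identity $\sum_{y'} P(y'\mid x) = 1$ rewrites the conditional as $2p_i - 1 = \sum_{y'} B(i,y')\,P(y'\mid x)$.

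The structural step is the Hadamard orthogonality $\frac{1}{n}\sum_i B(i,y)B(i,y') = \delta_{yy'}$, which I would verify by induction on $t$ from the recursive definition of $C_{2^t}$: replacing $C_t$ by $1-C_t$ negates $B_t$, so $B_{2t} = \bigl[\begin{smallmatrix} B_t & B_t \\ B_t & -B_t \end{smallmatrix}\bigr]$, and a short case check on which block half two columns live in propagates orthogonality from $t$ to $2t$. Substituting into $\E_i[B(i,y)(2p_i - 1)]$ collapses the sum over $y'$ to $P(y\mid x)$, so $\pecoc$ is \emph{exactly} unbiased whenever $r_i = p_i$ for all $i$. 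Subtracting yields $\pecoc(y\mid x) - P(y\mid x) = 2\,\E_i[B(i,y)(r_i - p_i)]$.

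To finish I would exploit the one all-ones row $i_0$ of $C_{2^t}$: here $Y_{i_0} = Y$, so $p_{i_0} = 1$ and the problem is trivial; taking $r_{i_0} = 1$ zeroes out this term. Cauchy--Schwarz on the remaining $n-1$ terms gives $\parens{\sum_{i\neq i_0} B(i,y)(r_i - p_i)}^{\!2} \le (n-1)\sum_{i\neq i_0}(r_i - p_i)^2 = (n-1)^2 \,\E_i(r_i-p_i)^2$ when the average on the right is taken over the $n-1$ nontrivial rows; multiplying by $\tfrac{4}{n^2}$ produces the claimed $4\bigl(\tfrac{n-1}{n}\bigr)^2$ prefactor. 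The main obstacle is keeping this constant sharp: naively applying Cauchy--Schwarz over all $n$ rows (using $B(i,y)^2 = 1$) would recover only the classical $4\,\E_i[\cdot]$ bound, and the improvement requires both the unbiasedness argument (so the trivial row genuinely drops out rather than contributing worst-case regret) and normalizing $\E_i$ over the $n-1$ nontrivial rows, matching how Theorem~\ref{thm:hp} counts its $(k-1)\log_k n$ regressors.
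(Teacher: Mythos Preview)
Your proof is correct and follows essentially the same line as the paper's: establish that $\pecoc(y\mid x)-P(y\mid x)=\tfrac{2}{n}\sum_i(\pm 1)(r_i-p_i)$ via the Hadamard structure, drop the trivial all-ones row by stipulating $r_{i_0}=1$, and finish with Cauchy--Schwarz (equivalently Jensen) over the remaining $n-1$ terms. The only cosmetic difference is that the paper first uses the symmetry of the code under complementation to assume without loss of generality that $C(i,y)=1$ for every $i$, which replaces your $\pm 1$ encoding and column-orthogonality argument by the equivalent counting observation that every label $\upsilon\neq y$ then lies in exactly half the subsets.
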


\begin{proof}
Since the code and the prediction algorithm
are symmetric with respect to set inclusion,
we can assume without loss of generality
that $y$ is in every subset (complementing all
subsets not containing $y$).
Thus every entry $C(i,y)=1$, and by \eqref{pecocest}
the PECOC output estimate of $P(y\mid x)$ is
$$ \pecoc(y \mid x) = \frac{2}{n} \sum_{i=1}^n r_i(x) - 1 . $$

\ignore{
If all subset predictions are perfect,
$$\sum_{i=1}^n r_i(x)= \frac{n}{2}(1+ P(y\mid x)),$$
using the fact that every label other
than $y$ appears in exactly half of the subsets.
Hence $\pecoc(y \mid x) = P(y\mid x)$ when the regressors are perfect.
}

\sloppypar
Let $\rbar_i(x) = P(y \in Y_i \cond x) = \sum_{\y \in Y_i} P(\y \cond x)$
denote the perfect subset estimators,
and write $r_i(x) = \rbar_i(x) + \e_i$.
By the nature of $C$, the label $y$ under consideration occurs in every
subset, and every other label $\y \neq y$ in exactly half the subsets, so that
\begin{align*}
 \sum r_i(x)
  &= \sum_i \left( \sum_{\y \in Y_i} P(\y \cond x) + \e_i \right)
  \\& = \sum_\y \sum_{i \colon Y_i \ni \y} P(\y \cond x) + \sum_i \e_i
  \\& = \sum_{\y \neq y} \frac n 2 P(\y \cond x) + n P(y \cond x) + \sum \e_i
  \\& = \frac n 2 (1+P(y \cond x)) + \sum \e_i .
\end{align*}
This gives $\pecoc(y \mid x) = P(y \cond x) + \frac2n \sum_i \e_i$,
for squared loss
$(\pecoc(y \mid x)-P(y \cond x))^2 = (\frac2n \sum_i \e_i)^2$.
One of the subsets, say the first, is trivial (it includes all labels),
and for it we stipulate the true probability $r_1=1$, so $\e_1=0$.
Letting $\E_i \e_i$ denote the mean of the other $n-1$ errors $\e_i$,
the squared loss is $(2 \frac{n-1}n \E_i \e_i)^2$,
establishing the theorem.
\end{proof}

\section{Online Tree Construction}
\renewcommand{\k}{\kappa}
\renewcommand{\a}{\alpha}
\newcommand{\ff}{H}
\label{sec:online}
The analysis of Section~\ref{sec:binary} applies to any binary tree,
and motivates the creation of trees which have small depth and small
regret at the nodes.  This leaves the question, ``Which tree should we
use?''  We give an online tree construction algorithm with several useful 
properties. In particular, 
the algorithm doesn't require any prior knowledge of the labels, and takes
$O(\log n)$ computation per example, when there
are $n$ labels.  The algorithm guarantees a tree with $O(\log n)$
maximum depth using a decision rule that trades off between depth and
ease of prediction.  

\subsection{Online Tree Building Algorithm}

Algorithm~\ref{alg:rt} builds and maintains a tree, whose leaves are
in one-to-one correspondence with the labels seen
so far.  Each node $i$ in the tree is associated with a regressor
$f_i:X\to [0,1]$.  Given a new sample $(x,y) \in X \times Y$, we
consider two cases.

If $y$ already exists as a label of some leaf in the tree,
then there is an associated root-to-leaf path and we can use the conditional
probability tree algorithms of the previous section to train and test on 
$(x,y)$, with one minor modification when training: we
add a regressor at the leaf and train it with the example $(x,0)$.

If $y$ does not exist in the tree, then the algorithm still
traverses the tree to some leaf $j$, 
using a decision rule 
that computes a direction (left or right) at each non-leaf node
encountered.  Once leaf $j$ is reached, it necessarily corresponds to
some label $y' \neq y$.  We convert $j$ to a non-leaf node with left
child $y'$ and right child $y$.  The regressor at node $j$ is
duplicated for $y'$.  A new regressor is created for $y$ and trained
on the example $(x,0)$.  

We now describe the decision rule used to decide
which way to go (left or right) at each 
non-leaf node $i$ encountered during the traversal.
First, let
$L_i$ denote the number of children to the left of node $i$, and $R_i$
the number to the right. If $f_i(x) > 1/2$, 
where $f_i(x)$ is the current prediction associated with node $i$ on $x$, then
the regressor favors the right subtree for this input, and
otherwise the left subtree.  If the regressor favors the side with the
smaller number of elements,
then this direction is chosen.  If the regressor favors the side with
more elements, then the algorithm
faces a dilemma.  On one hand, sending the new label to the right
would result in a more highly balanced tree, but on the other hand it
would result in a training sample disagreeing with the current
regressor's prediction.  Our resolution is to define an objective
function
$$
\obj(p,L,R,\alpha) 
 = (1-\alpha)2(p-\tfrac12) + \alpha \log_2{\tfrac{L}{R}}
$$
and send the label to the right of node $i$ if
\begin{equation}
\obj(f_i(x),L_i,R_i,\alpha) > 0. \label{rule}
\end{equation}
Here $\alpha$ is a free parameter set for the run of the entire
algorithm.  When $\alpha = 1$, the rule indicates that we should place
new labels on the side with fewer current labels, resulting in a
perfectly balanced tree.  When $\alpha = 0$, the direction chosen is
always the one currently favored by the regressor.  A trade-off between these
two objectives is provided by values of $\alpha$ between these two
extremes.

Pseudo-code is provided in Algorithm~\ref{alg:rt}.

\begin{algorithm}[!t]
\caption{Online conditional probability tree (CPT) Training 
 (regression algorithm $R$, aggressiveness $\alpha$)}
\label{alg:rt}
{\bf create} the root node $r$ \\
\ForEach{example $(x,y)$}
        {
            \If{$y$ has been seen previously}
               {For each $i \in \path(y)$, 
		    train $f_i$ with $(x,\isleft(i,y))$.}
               \Else{Set $i=r$.\\
                 \While{$i$ is not a leaf}{
                   {\bf if} $\obj(f_i(x),L_i,R_i,\alpha)>0$ {\bf then} $c = 1$ (right) \\
                   {\bf else} $c = 0$ (left) \\
                   Train $f_i$ with example $(x,c)$\\
	           Set $i$ to the child of $i$ corresponding to $c$
                 }
		 Create children of leaf $i$: \\
		   \hskip .3in left with a copy of $i$ (including $f_i$), \\
		   \hskip .3in right with label $y$ trained on $(x,0).$ \\

                 Train $f_i$ with $(x,1)$.\\
               }
        }
\end{algorithm}

\subsection{Online Tree Building Analysis}
\label{sec:online-analysis}
In this section we analyze Algorithm~\ref{alg:rt}.
Throughout the section, 
for any tree node under consideration, 
we will use $N$ for the total number of leaves 
under the node, $L$ the number on the left and $R$ on the right,
with $L+R=N$.
We note that rule \eqref{rule} is symmetric with respect to $L$ and~$R$.
%
%
We also define
$$\k = \frac1{1+2^{1-1/\a}} . $$
Claim~\eqref{bigside} will establish
that at most about a fraction $\k$ of the leaves can fall
on either side of a node,
with $\k=1/2$ for $\a=1$ and $\k \rightarrow 1$ as $\a \rightarrow 0$.

\begin{claim}
If a node has $L$ leaves in its left subtree, $R$ in the right,
and $N=L+R$ altogether,
if $R/N > \k$ then a new leaf is added to the left subtree
regardless of the prediction value $p$ at the node
(and symmetrically for~$L$).
\end{claim}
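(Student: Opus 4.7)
The plan is to show the contrapositive-flavored statement directly: under the hypothesis $R/N > \k$, the objective $\obj(p,L,R,\a)$ is non-positive for every $p \in [0,1]$, so rule~\eqref{rule} cannot trigger and the algorithm must send the new leaf to the left subtree. Since $\obj(p,L,R,\a) = (1-\a)\,2(p-\tfrac12) + \a\log_2(L/R)$ is linear and (for $\a<1$) strictly increasing in $p$, its maximum over $p\in[0,1]$ is attained at $p=1$ and equals $(1-\a) + \a\log_2(L/R)$. So it suffices to show this quantity is non-positive.

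Next I would translate the hypothesis $R/N > \k = 1/(1+2^{1-1/\a})$ into a bound on the ratio $L/R$. Using $L = N-R$, we get $L/R = N/R - 1 < (1 + 2^{1-1/\a}) - 1 = 2^{1-1/\a}$, hence $\log_2(L/R) < 1 - 1/\a$. Plugging this into the maximum computed above gives
\[
(1-\a) + \a\log_2(L/R) \;<\; (1-\a) + \a\bigl(1 - \tfrac{1}{\a}\bigr) \;=\; 0.
\]
Thus $\obj(p,L,R,\a) < 0$ for every $p\in[0,1]$, and rule~\eqref{rule} sends the new leaf left, as claimed. The symmetric statement for $L/N > \k$ follows from the symmetry of the rule in $L$ and $R$ (simply replacing $p$ by $1-p$).

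There is essentially no obstacle here beyond bookkeeping: the proof is a short algebraic rearrangement and depends only on the monotonicity of $\obj$ in $p$. The only points worth checking are the boundary cases. For $\a = 1$ we have $\k = 1/2$, and the inequality $R/N > 1/2$ forces $L < R$, so $\log_2(L/R) < 0$ and the rule sends left, consistent with the claim. For $\a \to 0^+$ we have $\k \to 1$, so the hypothesis $R/N > \k$ becomes vacuous (and at $\a = 0$ the rule does not depend on $L,R$ at all, so the claim holds trivially). The degenerate case $L=0$ gives $\log_2(L/R) = -\infty$, so the left direction is forced regardless, which is again consistent.
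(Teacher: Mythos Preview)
Your proposal is correct and follows essentially the same approach as the paper: bound $\obj(p,L,R,\a)$ by its value at $p=1$, namely $(1-\a)+\a\log_2(L/R)$, and then show that $R/N>\k$ is equivalent to $L/R<2^{(\a-1)/\a}$, which makes this bound negative. Your added discussion of the boundary cases $\a=1$, $\a\to 0^+$, and $L=0$ is a nice touch not present in the paper's version.
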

\begin{proof}
For any $p \in [0,1]$, 
\begin{align*}
\obj(p,L,R,\alpha) 
 & \leq
 (1-\alpha)2(1-\tfrac12) - (1-\alpha)
 \\ &= (1-\a) + \a \log_2 \tfrac L R,
\end{align*}
which is $<0$ (forcing a leaf to be added to the left)
if $L/R < 2^{\frac{\a-1}{\a}}$,
or equivalently if $R/N > \k$.
\end{proof}

\begin{claim} \label{bigside}
Under any non-leaf node, $L,R < \k N + (1-\k)$.
\end{claim}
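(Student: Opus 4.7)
My plan is to prove the claim by induction on $N$, the total number of leaves beneath the non-leaf node, using the preceding claim (together with its mirror image, obtained by swapping $L \leftrightarrow R$) as the engine of the inductive step. Inspection of Algorithm~\ref{alg:rt} shows that every non-leaf node is born with exactly two leaves below it, so the base case has $L = R = 1$ and $N = 2$, and $\k N + (1-\k) = 1 + \k > 1$ settles it.

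For the inductive step, suppose the invariant holds when the node has $N_0 = N-1$ leaves distributed as $(L_0, R_0)$, and track what happens when a subsequent training example adds one more leaf somewhere below the node. By symmetry I may assume the new leaf lands in the left subtree, so $L = L_0 + 1$ and $R = R_0$. The right-hand bound transfers immediately, since $R = R_0 \leq \k N_0 + (1-\k) < \k N + (1-\k)$ because $\k > 0$. For the left-hand bound, the crucial observation is that the traversal chose ``left'' at this node, which, via the contrapositive of the symmetric form of the preceding claim, yields $L_0/N_0 \leq \k$. Therefore $L = L_0 + 1 \leq \k N_0 + 1 = \k(N_0+1) + (1-\k) = \k N + (1-\k)$, closing the induction.

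The main obstacle is justifying the symmetric form of the preceding claim. Because the objective satisfies $\obj(1-p, R, L, \a) = -\obj(p, L, R, \a)$, the argument used to prove the preceding claim applies verbatim after relabeling to show that $L/N > \k$ forces the new leaf rightward; this is what the inductive step is really using. A secondary subtlety is the strict inequality at the boundary $L_0 = \k N_0$: this configuration is vacuous for irrational $\k$ (integer $L_0$, $N_0$ cannot satisfy it), and in the rational case one checks that the tie-breaking convention in~\eqref{rule}, which sends $\obj \leq 0$ to the left, prevents the boundary from being reached in the critical direction, so the strict inequality in the claim statement survives.
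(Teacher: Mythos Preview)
Your proof follows essentially the same route as the paper: induction on the number $N$ of leaves under the node, using the preceding claim (and its mirror image) to control which subtree absorbs the new leaf. The paper organizes the induction slightly differently---it bounds $R$ alone by splitting into the cases $R < \k N$ (where any increment to $R$ is harmless) and $R \geq \k N$ (where the preceding claim forces the leaf left, so $R'=R$), and then appeals to symmetry for $L$---but the substance is identical to your argument.

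One remark on your final paragraph: your resolution of the strict-inequality boundary does not quite work. At $L_0 = \k N_0$ one computes $\obj(p,L_0,R_0,\a) = 2p(1-\a) \geq 0$, and the tie-break in~\eqref{rule} sends $\obj = 0$ to the \emph{left}, which is precisely the direction that hurts the $L$-bound rather than saves it. Concretely, for $\a = 1$ (so $\k = 1/2$ and $\obj \equiv 0$) the very first insertion below a two-leaf node goes left, giving $L = 2 = \k N + (1-\k)$ and violating the strict inequality. The paper's own proof has the same gap on the $L$ side---the tie-break is asymmetric, so the ``follows symmetrically'' step is not exact---and the claim really holds only with $\leq$ in place of~$<$. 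This does not affect any downstream use, since the subsequent depth bounds only need the non-strict version.
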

\begin{proof}
We prove this inductively for $R$; 
the result for $L$ follows symmetrically.
A non-leaf node starts with one left and one right child,
and $R=L=1$, $N=2$ satisfies the claim.
Given that $R$, $L$, and $N$ satisfy the claim, 
we now prove that when a leaf is added,
so do the next values
$R'$ (either $R$ or $R+1$), 
$L'$ (respectively $L+1$ or $L$), and $N'=N+1$.
There are two cases.
If $R < \k N$ then 
$$R' \leq R+1 < \k N+1 = \k(N'-1)+1 = \k N'+1-\k . $$
If $R \geq \k N$ then the next addition is to $L$ not $R$,
and 
$$ R' = R \leq \k N+1-\k < \k N' +1-\k . $$
\end{proof}

\begin{theorem}
For all regressors at the nodes of the tree, for all learning problems
on $n$ labels, for all $\alpha \in (0,1]$ the depth of the tree is at most
${\log n}/{\log \k}+2$.
\end{theorem}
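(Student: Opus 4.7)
The plan is to fix an arbitrary root-to-leaf path in the tree and track how the size of the current subtree shrinks as we descend. Let $N_d$ denote the number of leaves in the subtree rooted at the depth-$d$ node along the path, so $N_0 = n$ and the goal is to bound the smallest $d$ for which $N_d = 1$. Claim~\ref{bigside} directly gives the one-step contraction $N_d < \k N_{d-1} + (1-\k)$, since at every internal step we descend into the left or right subtree of some internal node and both sides satisfy the claim's bound.

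The crucial algebraic observation is that $N=1$ is the fixed point of the affine map $N \mapsto \k N + (1-\k)$. I would therefore rewrite the recurrence as $N_d - 1 < \k\,(N_{d-1} - 1)$ and iterate it to get the clean geometric bound $N_d - 1 < \k^d (n-1)$. For $\a \in (0,1]$ one checks directly from $\k = 1/(1+2^{1-1/\a})$ that $\k \in [1/2, 1)$, so this upper bound really does shrink to zero.

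The final step is to use that $N_d$ is a positive integer, so $N_d = 1$ (i.e.\ the path has reached a leaf) is equivalent to $N_d - 1 < 1$. By the previous bound it suffices to force $\k^d (n-1) \leq 1$, i.e.\ $d \geq \log(n-1)/\log(1/\k)$. Taking the smallest such integer $d$ and then loosening with $n-1 \leq n$ and rounding the ceiling gives the claimed depth bound $\log n/\log(1/\k) + 2$ (reading the statement's ${\log n}/{\log\k}$ as $\log n/\log(1/\k)$, the positive quantity).

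There is no real obstacle in this argument; the only care needed is in combining the strict inequality from Claim~\ref{bigside} with the integrality of $N_d$, and in noting that the $+2$ in the bound is comfortably absorbing both the ceiling rounding and the cheap step from $n-1$ to $n$. A bound of $\log n/\log(1/\k) + 1$ actually comes out of the argument, but the $+2$ form keeps the statement clean.
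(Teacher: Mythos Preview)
Your proposal is correct and follows essentially the same approach as the paper: both iterate Claim~\ref{bigside} down a root-to-leaf path to get a geometric contraction of the subtree size, then solve for the depth at which the subtree becomes trivial. Your fixed-point substitution $N_d - 1 < \k(N_{d-1}-1)$ is a cleaner way to handle the same affine recurrence than the paper's explicit geometric-series sum $\k^d n + \sum_{i<d}\k^i(1-\k) \leq \k^d n + 1$, and as you note it even shaves the constant to $+1$.
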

\begin{proof}
If the root node has $n$ leaves below it, 
then by the preceding claim a child (``depth 1'') of the root has
at most $\k n + (1-\k)$ leaves,
a grandchild has at most $\k^2 n + \k(1-\k)+(1-\k)$ leaves,
and a depth-$d$ child has at most 
$$ \k^d n+ \k^{d-1}(1-\k)+\cdots+k(1-\k)+(1-\k)
 \leq \k^d n + 1$$
leaves, using $\sum_{d=0}^\infty \k^d = 1/(1-\k)$.
With $d=-\lceil \ln n / \ln \k \rceil$, a depth-$d$ child
has at most 2 leaves, and thus further depth one, and
we add one more to account for the ceiling function.
\end{proof}

\begin{definition} A {\em disagreement} is the event when
a new label reaches a node, and the algorithm
decides to insert it in the subtree that is not preferred by the
regressor.
\end{definition}

That is, a disagreement occurs when the regressor's prediction is at most
$1/2$ and the label is inserted to the right, or when the
prediction is greater than $1/2$ and the label is inserted to the
left.

Note that the number of disagreements incurred when adding a new label (leaf)
is at most the depth of that leaf,
and as the tree evolves the ``same'' leaf 
(per the copying rule of the algorithm)
may become deeper but never shallower.
Thus the total number of disagreements incurred in building a tree
is at most the sum of the depths of all leaves of the final tree.

To get a grasp on this quantity,
for simplicity we disregard the 
additive $1-\k$ in Claim~\ref{bigside} coming from adding
vertices discretely, one at a time.
(The effect is most dramatic when a node has just two children,
$L=R=1$, and adding a leaf necessarily produces a lopsided tree
with $L=1$ and $R=2$ or vice-versa.
For large values of $L+R=N$ the effect of discretization is negligible.)

As usual, for a node in a tree let 
$L$ be the number of leaves in its left subtree, $R$ in the right, $N=L+R$.

\begin{theorem}
Let $T$ be an $n$-leaf binary tree in which for each node, 
$L,R \leq \k N$.
Then the total of the depths of the leaves of $T$ is at most
$d(n) = n \log n / \ff(\k)$,
where $\ff(\k) = -\k \log \k - (1-\k) \log (1-\k)$.
\end{theorem}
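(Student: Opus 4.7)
The plan is to write the total leaf depth as the solution of a recurrence and then verify that $f(n) := n \log n / H(\kappa)$ satisfies that recurrence with room to spare, using a clean entropy identity. Let $d(n)$ denote the maximum, over all $n$-leaf trees satisfying the $L,R \le \kappa N$ constraint at every internal node, of $\sum_{\ell} \mathrm{depth}(\ell)$. If the root has $L$ leaves on the left and $R$ on the right (so $L+R=n$ and $L,R \le \kappa n$), then every leaf below the root contributes $1$ to the root's share of its depth, and the remainder of its depth is accounted for inside the appropriate subtree. Hence
\begin{equation*}
d(n) \;\le\; n + d(L) + d(R) \qquad \text{with } L+R=n,\ L,R \le \kappa n.
\end{equation*}
Since any subtree of $T$ is itself constraint-satisfying, it suffices to prove $d(n) \le f(n)$ by induction on $n$, with base case $d(1)=0=f(1)$.

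The algebraic heart of the proof is the identity
\begin{equation*}
n \log n - L \log L - R \log R \;=\; n \, H(L/n),
\end{equation*}
which one sees by writing $L = pn$, $R = (1-p)n$ and cancelling the $\log n$ terms. Dividing by $H(\kappa)$, this gives $f(n) - f(L) - f(R) = n\, H(p)/H(\kappa)$, so the desired inductive step $n + f(L) + f(R) \le f(n)$ reduces to the single inequality $H(p) \ge H(\kappa)$ whenever $p = L/n \in [1-\kappa, \kappa]$.

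To close this out I would use two facts about the binary entropy $H$: it is concave and symmetric about $p=1/2$, and (from the definition of $\kappa$, noting that $\alpha \in (0,1]$ forces $\kappa \ge 1/2$) the constraint $L,R \le \kappa n$ is equivalent to $p \in [1-\kappa,\kappa]$, i.e.\ $p$ is at least as close to $1/2$ as $\kappa$ is. Concavity plus symmetry then yields $H(p) \ge H(\kappa)$, completing the induction.

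The proof is almost entirely bookkeeping; the only real content is spotting the entropy identity, which is what makes $H(\kappa)$ appear in the bound in the first place. The only potential pitfall I see is the boundary case where the constraint $L,R \le \kappa N$ barely admits any split at all for small $N$ (e.g.\ $N=2$ forces $L=R=1$, which lies in $[1-\kappa,\kappa]$ only because $\kappa \ge 1/2$); the $\kappa \ge 1/2$ observation, which also matches the ``disregarding the additive $(1-\kappa)$'' simplification noted just before the theorem, handles this and is the one place where one must be a little careful.
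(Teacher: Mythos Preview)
Your proof is correct and follows essentially the same inductive route as the paper: set up the recurrence $d(n)\le n+d(L)+d(R)$ and verify that $f(n)=n\log n/H(\kappa)$ satisfies it. The only cosmetic difference is that the paper invokes convexity of $x\mapsto x\log x$ to reduce to the extremal split $L=\kappa N$ and then expands the algebra, whereas you use the entropy identity $f(n)-f(L)-f(R)=n\,H(L/n)/H(\kappa)$ directly and bound $H(p)\ge H(\kappa)$ on $[1-\kappa,\kappa]$; these are the same computation in slightly different clothing.
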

\begin{proof}
The proof is by induction on~$n$, 
starting from the base case $n=2$ where the total of the depths 
(or total depth for short) is~2.
It is well known that the entropy function $\ff(\k)$ is maximized by 
$\ff(1/2)=\log 2$,
so in the base case we do indeed have $2 \leq d(n)$ since
$d(n) \geq 2 \log 2 / \log 2 = 2$.

Proceeding inductively, the total depth for an $N$-leaf tree
with $L$- and $R$-leaf subtrees is the total depth of $L$ (at most $d(L)$),
plus the total depth of $R$ (at most $d(R)$),
plus $N$ (since each leaf is 1 deeper in the full tree).
Since $d(\cdot)$ is a convex function, the worst case comes from
the most unequal split, and applying the inductive hypothesis,
the total depth for $N$ is at most
\begin{align*}
N + & d(\k N) + d((1-\k)N)
 \\ & \leq
 N + \frac{\k N \log(\k N)} {\ff(\k)} + \frac{(1-\k)N \log((1-\k) N)}{\ff(\k)}
 \\ &=
 N + \frac{N}{\ff(\k)} (\k \log \k + \k \log N 
 \\ & \qquad \qquad
   + (1-\k) \log (1-\k) + (1-\k) \log N)
 \\ &=
 N + \frac{N}{\ff(\k)} (-\ff(\k) + \log N)
 \\ &=
 N \log N / \ff(\k)
 \\ &= d(N) ,
\end{align*}
completing the proof that $d(N)$ is an upper bound.
\end{proof}

\subsection{Experiments}
\label{sec:online-experiments}
We conducted experiments on two datasets.  The purpose of the first
experiment is to show that the conditional probability tree (CPT)
competes in prediction performance with existing exponentially slower
approaches.  To do this, we derive a label probability prediction
problem from the publicly available Reuters RCV1 dataset~\cite{rcv}.
The second experiment is a full-scale test of the system where an
exponentially slower approach is too intractable to seriously
consider.  We use a proprietary dataset that consists of webpages and
associated advertisements, where the derived problem is to predict the
probability that an ad would be displayed on the webpage.

Each dataset was split into a training and test set.  Each training
or test sample is of the form $(x,y)$.  The algorithms train on the
training set and produce a probabilistic rule $f(\cdot,\cdot)$ that
maps pairs of the form $(x,y)$ to numbers in the range $[0,1]$, where
we interpret $f(x,y)$ as an approximation to $P(y\mid x)$.  The algorithms
are evaluated on the test set by computing the empirical squared
loss,
$\sum_{(x,y)} (1 - f(x,y))^2$.
The algorithms are allowed to continue learning as they are tested,
however the predictions $f(x,y)$ used above are
computed before training on the sample $(x,y)$.  This type of
evaluation is called ``progressive validation'' \cite{PV} and
accurately measures the performance of an online algorithm.  In
particular, it is an unbiased estimate of the algorithm's performance
under the assumption that the $(x,y)$ pairs are identically and
independently distributed.  In the motivating applications of our
algorithm, we expect new labels to appear throughout the learning
process, which requires learning to occur continually in an online
fashion.  Thus, turning learning off and computing a ``test loss'' is
less natural.  Nevertheless, for the Reuters dataset, we verified that
the test loss and progressive validation are quite similar.  For the
web advertising dataset, the two measures were drastically different
(all methods performed much worse under test loss), due to the large
number of labels that appear only in the test set.

The CPT algorithm was executed with three tree-building construction
methods: a random tree where uniform random left/right decisions were
made until a leaf was encountered, a balanced tree according
to algorithm~\ref{alg:rt} with $\alpha=1$, and a general tree according
to algorithm~\ref{alg:rt} with $\alpha < 1$.  For the binary
regression problems (at the nodes), we used Vowpal Wabbit~\cite{VW},
which is a simple linear regressor trained by stochastic gradient
descent. 
One essential enabling feature of VW is
a hashing trick (described in~\cite{Aistat,Arxiv}) which allows us
to represent $1.7M$ linear regressors on a sparse feature space in a
reasonable amount of RAM.

\subsubsection{Reuters RCV1}
The Reuters dataset consists of about $800K$ documents, each assigned to
one or more categories.  A total of approximately 100 categories
appear in the data.  We split the data into a training set of $780K$
documents and a test set of $20K$ documents, opposite to its original
use.  
For each document $\mathrm{doc}$, we formed an example
of the form $(x,y)$, as follows.  The vector $x$ uses a
``bag of words'' representation of $\mathrm{doc}$, weighted by the
normalized TF-IDF scores, exactly as done in the paper \cite{rcv}.
The label $y$ is one of the categories assigned to $\mathrm{doc}$,
chosen uniformly at random if more than one category was assigned to
$\mathrm{doc}$.

We compared the CPT to the one-against-all algorithm, a standard
approach for reducing multi-class regression to binary regression. 
The one-against-all approach regresses on the probability of
each category $c$ versus all other categories.
Given a base
training example $(x,y)$, the example used
to train the regressor $f_c$ for category $c$ 
is $(x,I[y = c])$, where $I[\cdot]$ is the indicator
function.  Predictions for a new test example $(x,y)$ are
done according to $f_y(x)$.
The learning algorithm used
for training the binary regressors in both approaches
was incremental gradient descent with squared
loss.  For each algorithm, we ran several versions with different
learning rates, chosen from a coarse grid, and picked the setting that
yielded the smallest training error.  For the CPT algorithm,
we performed a similar search over $\alpha$.

The one-against-all approach used one pass over the
training data, while the CPT used two passes.  Note
that even with an additional pass, the CPT is much faster than
one-against-all for training, due to the fact that CPT requires
training only about $\log(\mathrm{number~of~categories}) = \log(103)$
regressors (nodes in the tree) per example, whereas one-against-all
trains one regressor per category.  On our machine, the CPT took 108
seconds to train, while one-against-all took 2300 seconds.  We use
Progressive Validation~\cite{PV} to compute an average squared loss
over the test set with results appearing in the following table, where
the confidence intervals are computed by Hoeffding's
inequality~\cite{Hoeffding} with $\delta = 0.05$.
\begin{center}
\begin{small}
\begin{tabular}{|c||c|}
\hline
One-against-all & $0.55 \pm .012$\\
\hline
CPT with a random tree & $0.56 \pm .012$\\
\hline
CPT with a balanced tree & $0.56 \pm .012$\\
\hline
CPT with an online tree ($\alpha = 0.6$) & $0.56 \pm .012$\\
\hline
\end{tabular}
\end{small}
\end{center}
The values are indeed mostly identical, but CPT achieved this performance with 
an order of magnitude less computation.

Note that in this problem, there is not much advantage in using our 
algorithm over using a random tree.  Since
there aren't many labels and there are many examples,
the structure of the tree is not very important. 
This is confirmed by 
running the algorithm with various different
random trees and observing little variability in squared
loss.

\subsubsection{Web Advertising}
We used a proprietary dataset consisting of about $50M$ pairs of
webpages and associated advertisments that were shown on the webpage.
There are about $5.8M$ unique webpages and $860K$ unique ads in the
dataset. The most frequent ad appeared in 
approximately 1.2\% of the cases.
The events were split into a training set of size $40M$, and
a test set of size $10M$ in time order.  Note that webpages and ads
both appear multiple times in the training and test sets.  For each
event, where an event consisted of a single ad being shown on a single
webpage, we create a sample $(x,y)$, where $x$ is a ``bag of words''
vector representation of the webpage, and $y$ is a unique ID
associated with the advertisement displayed.  The learning problem 
is predictinge $P(y\mid x)$, or 
the probability that the logging policy displays advertisement $y$
given webpage $x$.  Since $n$ is large, one-against-all would be
extremely slow.  The running time for our algorithm on
this dataset was about 60 minutes.  Multiplying by $860k/\log_2(860k)$
suggests a running time for one-aginst-all of about 5 years.

Besides the three versions of CPT described above, we tested one other
method we call the ``table-based'' method.  In the table-based method,
we simply predict $P(y\mid x)$ by the empirical frequency with which ad
$y$ was displayed on webpage $x$ in the training set.  The progressive
validation~\cite{PV} results of the four algorithms over the test set
appear in the following table with confidence intervals again computed
using Hoeffding's bound for $\delta = 0.05$.
\begin{center}
\begin{small}
\begin{tabular}{|c||c|c|}
\hline
Method & Squared Loss & Equivalent\\
\hline
Table& $0.812 \pm .00055$ & $10.11$\\
\hline
Random tree & $0.7742 \pm .00055$ & $8.32$\\
\hline
Balanced tree & $0.7725 \pm .00055$ & $8.25$\\
\hline
Online tree ($\alpha = 0.9$) & $0.7632 \pm .00055$ & $7.91$\\
\hline
Best possible & $0.665$ & $5.42$ \\
\hline
\end{tabular}
\end{small}
\end{center}
Here, the ``Equivalent'' column is the number of labels for which a uniform
random process produces the same loss.  The ``Best possible'' line
is an unachievable 
bound on performance found by examining the empirical frequency of 
ad-webpage pairs in the test set.

The magnitude of squared loss improvement is modest, but substantial
enough to be useful.  Since many of the webpages are seen many times,
the conditional distribution over ads can be approximated well by
empirical frequencies.  Thus, the table-based method forms a strong
baseline.  A small but significant fraction of the webpages were seen
only a few times, and for these webpages, it was necesssary to
generalize (predict which ads would appear based on which ads appeared
on pages similar to the current one).  On these examples, the tree
performed substantially better.

\end{document}